\documentclass{article}

\PassOptionsToPackage{round}{natbib}
\usepackage[preprint]{neurips_2026}

\usepackage[utf8]{inputenc} 
\usepackage[T1]{fontenc}    
\usepackage{hyperref}       
\usepackage{url}            
\usepackage{booktabs}       
\usepackage{amsfonts}       
\usepackage{nicefrac}       
\usepackage{microtype}      
\usepackage{xcolor}         

\usepackage{amsmath}
\usepackage{amsthm}
\usepackage{amssymb}
\usepackage{thmtools}
\usepackage[capitalize]{cleveref}
\usepackage{todonotes}
\usepackage[normalem]{ulem}
\newcommand{\cA}{\mathcal{A}}
\newcommand{\cS}{\mathcal{S}}
\newcommand{\cN}{\mathcal{N}}
\newcommand{\cM}{\mathcal{M}}

\newcommand{\cL}{\mathcal{L}}
\newcommand{\cP}{\mathcal{P}}

\newcommand{\datatrain}{D_{\textrm{train}}}

\newcommand{\xtarget}{x_{\textrm{target}}}
\newcommand{\modelinx}{\mathcal{M}_{\textrm{in}}}
\newcommand{\modeloutx}{\mathcal{M}_{\textrm{out}}}
\newcommand{\modeltarget}{\mathcal{M}_{\textrm{target}}}

\newcommand{\pinx}{p_{\textrm{in}, x}}
\newcommand{\poutx}{p_{\textrm{out}, x}}

\newcommand{\muinx}{\mu_{\textrm{in}, x}}
\newcommand{\muoutx}{\mu_{\textrm{out}, x}}

\newcommand{\siginx}{\sigma_{\textrm{in}, x}}
\newcommand{\sigoutx}{\sigma_{\textrm{out}, x}}

\newcommand{\miagrid}{\mathcal{G}}

\newcommand{\tpr}{\mathrm{TPR}}
\newcommand{\fpr}{\mathrm{FPR}}

\newtheorem{theorem}{Theorem}[section]

\newtheorem{lemma}[theorem]{Lemma}
\newtheorem{proposition}{Proposition}[section]

\theoremstyle{definition}
\newtheorem{definition}{Definition}[section]

\usepackage{subfig}
\usepackage{enumitem}
 
\title{On Reliability of Efficient Membership Inference Vulnerability Evaluation}

\author{
Joonas Jälkö$^{1}$ \quad
Gauri Pradhan$^{1}$ \quad
Ossi Räisä$^{2}$ \quad
Antti Honkela$^{1}$ \\
$^{1}$University of Helsinki, Finland \\
$^{2}$CISPA Helmholtz Center for Information Security, Germany \\
\texttt{\{joonas.jalko, gauri.pradhan, antti.honkela\}@helsinki.fi} \\
\texttt{ossi.raisa@cispa.de}
}

\begin{document}

\maketitle

\begin{abstract}
    Membership inference attacks (MIAs) are popular methods for empirically
    assessing the leakage of sensitive information in the training data through
    models or statistics learned from the data. The MIA vulnerability is often
    evaluated through false positive rate ($\fpr$) and true positive rate ($\tpr$) of a binary classifier that tries to
    predict whether a particular sample was in the training data.
    However, in order to reliably estimate the $\tpr$ especially
    for low $\fpr$ values, 
    a lot of observations are needed, which in case of MIA translates to many target
    models, leading to large computational cost. To avoid excessive compute requirements, the MIA scores are often averaged over multiple individuals and
    multiple targeted models. We demonstrate two key weaknesses in this efficient
    MIA evaluation pipeline. First, we show that evaluating the $\tpr$ based on MIA scores concatenated across multiple individuals, commonly used to study vulnerabilities in the very low $\fpr$ regime, is not calibrated across the
    per-sample $\fpr$s. This makes it unreliable as a tool for auditing differential privacy. To solve this, we propose a 
    post-processing method to effectively calibrate the $\fpr$ across different samples.
    Second, we identify a finite population bias in the commonly used efficient likelihood-ratio attack (LiRA) implementation
    proposed by \citet{carlini_membership_2021}, leading to a positive bias in
    the per-sample vulnerability. 
\end{abstract}

\section{Introduction}
\label{sec:intro}
Efficiently evaluating the privacy risk associated with a particular machine learning algorithm applied to a particular dataset is critical for calibrating the level of privacy protection, but no good general solutions exist for this problem.
The difficulty of the problem arises from many sources: 1) the privacy risk is a statistical phenomenon and evaluating it requires repeated observations; 2) identifying the most vulnerable samples needed to evaluate maximum risk can be difficult; and 3) it is difficult to know if the applied attack is optimal.

Privacy auditing methods \citep{annamalai2025hitchhikers} address each of these by 1) developing specific one-run-auditing (ORA) techniques \citep{DBLP:conf/nips/0002NJ23,mahloujifar2025auditing}; 2) crafting canary samples to represent maximally vulnerable samples in the entire data universe \citep{boglioni2026optimizing}; and 3) potentially using more easily auditable internals of the algorithm to ensure strong attacks.
These methods have weaknesses: the reliability of ORA is fundamentally limited \citep{xiang2025,keinan2025how}, and the actual samples may be more or less vulnerable than the canaries.

To evaluate the vulnerability of the actual samples, we focus on membership inference attacks (MIAs) for privacy risk estimation.
MIA is based on a binary classification problem: was a given sample part of the training dataset or not.
MIAs are useful for this purpose because unlike stronger privacy attacks like attribute inference and training data reconstruction, the setting is highly standardised.
MIA vulnerability is related to vulnerability to these other attacks by reductions \citep{salem_games_2023}.
The most widely used privacy enhancing technology in machine learning, differential privacy \citep{DBLP:conf/tcc/DworkMNS06}, can be naturally interpreted as a bound on MIA success \citep{wasserman2010}.

Following \citet{carlini_membership_2021}, MIA evaluation has focused on evaluating the true positive rate ($\tpr$) of membership identification at fixed low false positive rate ($\fpr$), as this is most sensitive to the adversary's ability to confidently detect even a small number of true members of the training dataset.
The downside of the focus on low $\fpr$ values is that estimating the corresponding $\tpr$ values reliably requires a lot of membership data.

Given $M$ attack target models with $N$ attack target points each, a per-data-point evaluation requires at minimum $\fpr > 1/M$ and for reliable estimation $\fpr \ge 10/M$ is advisable.
Reaching even moderately low $\fpr \le 0.1$ in this setting requires $M \ge 100$ which is often impractical.

In order to increase the number of target evaluations, it is appealing to mix evaluations from different target models and attacks for a total of $MN$ target evaluations.
This approach was proposed by \citet{carlini_membership_2021} for efficient MIA vulnerability evaluation and is also used by the ML Privacy Meter library \citep{Murakonda2020}.
This approach allows formally having enough evaluations for a very low $\fpr$, but unless one is very careful, the interpretation of these results can be difficult.
In particular, we show that the global $\fpr$ from this process is usually not what would be expected, which can lead to misleading evaluations of the privacy risk, and propose a correction.
We also highlight a finite population bias in this estimation process and propose a fix.

\paragraph{Contributions}
\begin{itemize}[nosep,leftmargin=2em]
    \item We study per-sample and concatenated MIA evaluations under
    binary hypothesis test framework. We explicate the different sources of
    randomness across the two aggregation methods, making it easier to 
    understand the difference in the vulnerabilities they report (\cref{sec:sample_vulnerability}).
    \item We show that concatenated MIA aggregation reports an average over per-sample vulnerabilities, but that a single global threshold induces heterogeneous per-sample $\fpr$s. This miscalibration can distort conclusions about which learning algorithms or defenses are more private (\cref{subsec:concat_mia}).
    \item To improve the concatenated approach, we propose a simple post-processing
    based approach that calibrates the per-sample $\fpr$s and allows the concatenation
    to report average per-sample $\tpr$ with the targeted $\fpr$ under mild
    Gaussian assumptions (\cref{subsec:post-processing,sec:results}).
    \item We demonstrate that the efficient LiRA approach by
    \citet{carlini_membership_2021} suffers from finite population bias in estimating
    the in/out-distribution variances. This leads to a bias in the vulnerability
    evaluation (\Cref{sec:finite-population,sec:results}).
\end{itemize}

\section{Background}

\subsection{Membership inference attacks}
The goal of the MIA is to infer whether a particular training sample belongs
to the training dataset of a particular learning outcome
\cite{shokri_membership_2016}. Denoting the target sample with $\xtarget$
and the learning outcome with $\modeltarget$, MIA can be formally stated
as the following binary hypothesis test 
\begin{equation}
    H_0: \xtarget \notin \datatrain \quad \text{ vs. } \quad H_1: \xtarget \in \datatrain,
\end{equation}
based on observing $\modeltarget \leftarrow \cA(\datatrain)$ where $\datatrain$
is latent. In this work we will focus on \emph{shadow-model} based MIA where
the test statistic for the above hypothesis test is based on comparing
the observed $\modeltarget$ to the distributions learning outcomes when
$\xtarget$ is in or out of the $\datatrain$. For a data distribution $\cP$
the in and out samples are distributed as
\begin{align}
    D &\sim \cP^{N}  \\ 
    \modeloutx &= \cA(D \setminus \{\xtarget\}) \\
    \modelinx &= \cA(D \cup {\xtarget}).
\end{align}
We will denote the in and out distributions for a sample $x$ respectively
as $\pinx$ and $\poutx$.

The LiRA algorithm by \cite{carlini_membership_2021} has proved to be a highly
effective. From the trained shadow models, LiRA infers normal approximations for
the in/out-distributions for each target point separately. The LiRA
score for a target point $x$ and model
$\modeltarget$ is given as 
\begin{align}
    \cL(x, \modeltarget) := \log \frac{\pinx(\modeltarget)}{\poutx(\modeltarget)} 
    = \log \frac{\mathcal{N}(\modeltarget; \, \muinx, \siginx^2)}{\mathcal{N}(\modeltarget; \, \muoutx, \sigoutx^2)},
\end{align}
where the parameters $\muinx, \muoutx, \siginx, \sigoutx$ are
learned from the shadow models.

\subsection{Variables in membership inference attacks}
The success of MIA is affected by various factors arising both from the training
algorithm as well as the data distribution.

\paragraph{Learning algorithm:} We consider the learning algorithm to be a fixed
program, however we allow the algorithm still to induce stochasticity into the
learning outcome e.g. through subsampling of the training data or through
operations such as dropout \citep{srivastava14a}. Any stochasticity in the learning
algorithm will affect the in/out-distributions, and therefore will have an effect
on the MIA vulnerability.

\paragraph{Data distribution:} The data distribution has a large effect on the
success rate for the MIA. Since the in/out-distributions correspond to learning
outcomes over different training sets drawn from the data distribution, it is easy to
see that the MIA scores are affected by the data distribution.

\paragraph{Dataset level properties:} Recently
\cite{tobaben2025impactdatasetpropertiesmembership} showed that the dataset
level properties such as number of samples and number of classes can have an
effect on the MIA vulnerability.

\subsection{Measuring vulnerability}
Since MIA is a binary hypothesis test, the vulnerability can be measured through
the properties of the distinguisher performing this test.
\cite{carlini_membership_2021} suggested measuring the vulnerability in terms of
the attack's $\tpr$ at low fixed $\fpr$. This metric has been since widely
adopted in the literature, see
e.g.~\cite{zarifzadeh_low-cost_2023,liu_membership_2022}. Empirically, the
$\fpr$ and $\tpr$ are commonly obtained from a ROC-curve built from the MIA
classifiers' scores and corresponding ground truth labels.

\paragraph{Trade-off function.}
The $\fpr$ is an empirical estimates of a classifiers type I error, while the
$\tpr=1 - \textrm{FNR}$ with $\textrm{FNR}$ being the estimate of the type II
error. \emph{Trade-off function} gives a lower bound for any classifiers $(\fpr,
\textrm{FNR})$ curve, i.e. the inverse ROC-curve, and is defined as follows
\begin{definition}[\cite{gaussianDP}]
    For any distributions $P$ and $Q$ on a measurable space $(\mathcal{X},\mathcal{F})$,
    the trade-off function $T(P, Q): [0,1] \rightarrow [0,1]$ is given as
    \begin{align}
        T(P, Q)(\alpha) = \inf \{ \beta_\phi \, : \, \alpha_\phi \leq \alpha \}
    \end{align}
    where $\phi: \mathcal{X} \rightarrow [0,1]$ is a rejection rule with
    \begin{align}
    \alpha_\phi = \mathbb{E}_P[\phi], \; \beta_\phi = 1 - \mathbb{E}_Q[\phi].
    \end{align}
\end{definition}

The next Lemma shows that the trade-off function is invariant to bijective 
post-processing.
\begin{lemma}[Bijective post-processing invariance]
    \label{lem:bijective-invariance}
    Let $P$ and $Q$ be distributions on $\mathbb{R}$, and let
    $f(x)$ be a bijective function.
    Denote by $P^{f}$ and $Q^{f}$ the pushforward distributions of $P$ and $Q$
    under $f$, i.e., $P^{f}(B) = P(f^{-1}(B))$ for all Borel sets
    $B \subseteq \mathbb{R}$, and similarly for $Q^{f}$. Then, for all
    $\alpha \in [0,1]$,
    \[
        T(P, Q)(\alpha)
        \;=\;
        T\bigl(P^{f}, Q^{f}\bigr)(\alpha).
    \]
\end{lemma}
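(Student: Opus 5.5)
The plan is to show that the sets of achievable (type I, type II) error pairs coincide for the two testing problems; the infima defining $T(P,Q)(\alpha)$ and $T(P^f,Q^f)(\alpha)$ are then equal. Throughout I assume $f$ is bimeasurable, i.e.\ $f$ and $f^{-1}$ are Borel measurable. This is automatic for continuous or monotone bijections, and for Borel bijections of $\mathbb{R}$ it follows from the Lusin--Souslin theorem. It is needed so that the pushforwards are well defined and rejection rules can be transported in both directions.

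First, for any measurable rejection rule $\psi:\mathbb{R}\to[0,1]$ for the pushed-forward problem, I define $\phi=\psi\circ f$. By the change-of-variables formula for pushforward measures, $\mathbb{E}_{P}[\psi\circ f]=\mathbb{E}_{P^f}[\psi]$, and likewise for $Q$. Hence $\alpha_\phi=\alpha_\psi$ and $\beta_\phi=\beta_\psi$. So every error pair achievable against $(P^f,Q^f)$ is achievable against $(P,Q)$, which gives $T(P,Q)(\alpha)\le T(P^f,Q^f)(\alpha)$. This direction uses only measurability of $f$ and is precisely the usual post-processing inequality.

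Conversely, for any rule $\phi$ for $(P,Q)$, I set $\psi=\phi\circ f^{-1}$. This function is measurable because $f^{-1}$ is. Since $\psi\circ f=\phi$, the same change-of-variables identity gives $\mathbb{E}_{P^f}[\psi]=\mathbb{E}_P[\phi]$ and $\mathbb{E}_{Q^f}[\psi]=\mathbb{E}_Q[\phi]$. Therefore the constraint sets $\{\alpha_\phi\le\alpha\}$ map onto each other with identical $\beta$ values, which gives the reverse inequality, and so $T(P,Q)(\alpha)=T(P^f,Q^f)(\alpha)$ for all $\alpha\in[0,1]$.

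The only delicate step is the measurability of $f^{-1}$ in the second direction. I would state the bimeasurability assumption explicitly, or invoke Lusin--Souslin. Everything else is the routine identity $\int g\,d(P^f)=\int g\circ f\,dP$ for bounded measurable $g$, which follows from the definition on indicators and extends by linearity and monotone convergence.
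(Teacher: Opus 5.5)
Your proof is correct and is essentially the paper's argument: the paper cites the post-processing inequality (Lemma~\ref{lem:to-post-process}) once for $f$, and once for $g=f^{-1}$ applied to $(P^f,Q^f)$ so that $P^{g\circ f}=P$. Your two rule-transport directions, $\phi=\psi\circ f$ and $\psi=\phi\circ f^{-1}$, are exactly those two applications written out from first principles. Your explicit bimeasurability assumption (or the Lusin--Souslin appeal) is a worthwhile addition, since the paper's choice $g=f^{-1}$ silently requires $f^{-1}$ to be measurable as well.
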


\begin{proof}
    According to \cref{lem:to-post-process}, for any functions $f$ and $g$, for any $\alpha \in [0,1]$ we have
    \begin{align}
        T(P, Q)(\alpha) \leq T(P^f, Q^f)(\alpha) \leq T(P^{g \circ f}, Q^{g \circ f})(\alpha),
    \end{align}
    where $(g \circ f)(x) = g(f(x))$. Now, if $f$ is an bijection it is invertible and 
    we can chooce $g = f^{-1}$, which yields
    \begin{align}
        &T(P, Q)(\alpha) \leq T(P^f, Q^f)(\alpha) \leq T(P^{f^{-1} \circ f}, Q^{f^{-1} \circ f})(\alpha) = T(P, Q)(\alpha) \\
        \Rightarrow \quad
        &T(P, Q)(\alpha) = T(P^f, Q^f)(\alpha).
    \end{align}
\end{proof}

\begin{lemma}[\citealp{gaussianDP}, Lemma 1]
    \label{lem:to-post-process}
    For any distributions $P$ and $Q$ and a function $f$, for any $\alpha \in [0,1]$ we have
    \begin{align}
        T(P^{f}, Q^{f})(\alpha) \geq  T(P, Q)(\alpha).
    \end{align}
\end{lemma}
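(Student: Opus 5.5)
The plan is to show that any rejection rule for the post-processed pair $(P^f, Q^f)$ can be simulated by a rejection rule for the original pair $(P, Q)$ with exactly the same type I and type II errors. Since the trade-off function for $(P,Q)$ is an infimum over a set of achievable $\beta$'s, it suffices to show that the set of achievable $(\alpha_\phi,\beta_\phi)$ pairs for $(P^f,Q^f)$ is contained in the set for $(P,Q)$. An infimum over a smaller set is at least as large, which is the claim.

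\textbf{Simulating a rule.} Let $\psi:\mathcal{Y}\to[0,1]$ be a measurable rejection rule on the output space of $f$, where $f:\mathcal{X}\to\mathcal{Y}$ is assumed measurable so that the pushforwards are defined. I define $\phi := \psi\circ f$ on $\mathcal{X}$. This is measurable and takes values in $[0,1]$, so it is a valid rejection rule for $(P,Q)$.

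\textbf{Matching the errors.} By the change-of-variables formula for pushforward measures,
\begin{align}
\mathbb{E}_P[\psi\circ f] = \int \psi \, dP^f = \mathbb{E}_{P^f}[\psi],
\qquad
\mathbb{E}_Q[\psi\circ f] = \mathbb{E}_{Q^f}[\psi].
\end{align}
Hence $\alpha_\phi=\alpha_\psi$ and $\beta_\phi=\beta_\psi$. This formula follows from the definition $P^f(B)=P(f^{-1}(B))$, first for indicator functions, then for simple functions by linearity, and finally for bounded measurable $\psi$ by monotone convergence.

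\textbf{Concluding.} Fix $\alpha\in[0,1]$. Every $\psi$ with $\alpha_\psi\le\alpha$ produces a $\phi$ with $\alpha_\phi\le\alpha$ and $\beta_\phi=\beta_\psi$. Therefore
\begin{align}
\{\beta_\psi : \alpha_\psi\le\alpha\}\subseteq\{\beta_\phi:\alpha_\phi\le\alpha\},
\end{align}
and taking infima gives $T(P^f,Q^f)(\alpha)\ge T(P,Q)(\alpha)$.

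No step here is hard. The only point needing care is the measurability of $f$, which is implicit in the statement because the pushforward measures must be well defined. The change-of-variables identity is standard.
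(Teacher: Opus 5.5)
Your proof is correct: pulling back an arbitrary test $\psi$ on the post-processed space to $\phi=\psi\circ f$ gives a test for $(P,Q)$ with the same type I and type II errors. The feasible set for $(P^f,Q^f)$ is therefore contained in the one for $(P,Q)$, so the infimum can only increase. The paper gives no proof of its own and simply cites \citet{gaussianDP}, whose argument is this same composition step, stated there for possibly randomized post-processing (the test $\psi$ is averaged over the post-processing randomness rather than composed with a deterministic $f$).
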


\section{Vulnerability of a sample} 
\label{sec:sample_vulnerability}

Assume now that we have access to an oracle that can compute the LiRA score
for any target point/model pair $(\xtarget, \modeltarget)$. Our goal is to use
the scores from the oracle to address the following binary hypothesis test
\begin{align}
    H_0: \, \xtarget \notin \datatrain \quad \textrm{ vs. } \quad H_1: \, \xtarget \in \datatrain.
\end{align}
The above test is often empirically assessed with an ROC-curve using a threshold
classifier to separate the in and out scores. However, the above hypothesis test
does not specify whether we consider the test over different $\modeltarget$ or
$\xtarget$, i.e. do we consider the vulnerability of a sample over different
learning outcomes, or do we consider the vulnerability of a learning outcome
over different samples. In this paper we will focus on the former, which we will
call the \emph{per-sample} vulnerability.
By selecting the target samples carefully, per-sample vulnerability allows estimating the vulnerability of the most vulnerable samples which is important for reliable vulnerability evaluation \citep{AerniZT24}.

In the per-sample setting, we assume the data distribution $\cP$ is fixed and we
study the binary hypothesis test over learning outcomes $\cA(\datatrain)$ with
$\datatrain \sim \cP$. Typically in MIA, we do not directly study the membership
based on the learning outcome $\cA(\datatrain)$ but based on some post-processed
quantity such as the logits scores in classification tasks. We will use
$\cS(\xtarget, \cA(\datatrain))$ to denote the statistic used for MIA, where
$\cS$ is a deterministic function.

Now, with the target sample fixed, our binary test can be rewritten as
\begin{align}
    H_0: \, \cS(\xtarget, \cA(\datatrain)) \sim \poutx  \quad
        \textrm{ vs. } \quad H_1: \, \cS(\xtarget, \cA(\datatrain)) \sim \pinx.
\end{align}
In other words, we are studying the distinguishability of $\xtarget$
from the learning outcome over the stochasticity arising from any inherent
randomness of $\cA$ and from the latent training dataset $D \sim \cP^n$
that may or may not include $\xtarget$.

In order to empirically assess this vulnerability, we query the oracle with
multiple models trained with or without $\xtarget$ and build the ROC-curve with
the corresponding scores and labels. The in/out scores can be written as
\begin{align}
    & \ell^{\text{(out)}}_{\xtarget} := \cL(\xtarget, \cA(D)) \\
    & \ell^{\text{(in)}}_{\xtarget} := \cL(\xtarget, \cA(D \cup \xtarget)),
\end{align}
where $D \sim \cP^n$.
The $\fpr$ and $\tpr$ for a threshold classifier with a decision threshold $\tau
\in \mathbb{R}$ become
\begin{align}
    & \fpr_x(\tau) = \Pr(\ell^{\text{(out)}}_x > \tau) 
        = \mathbb{E}_{D, \xi}[\mathbf{1}[\ell^{\text{(out)}}_x > \tau]] \\
    & \tpr_x(\tau) = \Pr(\ell^{\text{(in)}}_x > \tau) 
        = \mathbb{E}_{D, \xi}[\mathbf{1}[\ell^{\text{(in)}}_x > \tau]]
\end{align}
where the $D \sim P^n$ and $\xi$ denotes any additional stochasticity in $\cA$.

\section{Reporting aggregates of the vulnerability}
\label{sec:vulnerability_eval}

Consider now that we have MIA scores evaluated over $M$ target models and $N$
target data points. Each of the $N$ samples $\xtarget$ was either in our out of
the training data for each of the $M$ models. We represent these results with a
matrix $\miagrid \in \mathbb{R}^{M \times N}$. Each row of $\miagrid$ represents
per-model scores, and each column per-sample scores. 

In order to build the  $\miagrid$, we need to train $M$ models. Therefore in
many practical settings, our evaluations are often limited to small number $M$.
However, in order to evaluate the MIA vulnerability in the low $\fpr$ setting,
we would need at least $2/\fpr$ models to even reach the desired $\fpr$. And
even if we have enough models to meet this criteria, the estimated ROC-curve
can still be highly uncertain. This uncertainty prevents us from
reliably estimating the per-sample vulnerability. However, when we have access
to evaluations over multiple target models and target points, we can aggregate
over these results in order to estimate the vulnerability.

\subsection{Concatenating the MIA scores}
\label{subsec:concat_mia}
Concatenating MIA scores over multiple target models has been used in the past
to learn aggregates of the vulnerability \citep{carlini_membership_2021,Murakonda2020}. In concatenated MIA evaluation, we are
collapsing the MIA grid $\miagrid$ into a single vector, and evaluating the
vulnerability based on a ROC curve built on these raveled scores and their
corresponding labels.

The $\fpr$ and $\tpr$ for the concatenated approach are given as
\begin{align}
    & \fpr = \Pr(\ell^{\text{(out)}} > \tau), \quad  \tpr = \Pr(\ell^{\text{(in)}} > \tau),
\end{align}
where the dependence of both the model and sample to the scores is marginalized
away. 
The next proposition demonstrates how the concatenated approach is related to
the per-sample vulnerability.
\begin{proposition}
    For any fixed threshold $\tau$, the concatenated FPR and TPR are equal to the
    expectation of per-sample FPR$_x$ and TPR$_x$:
    \[
    \mathrm{FPR}(\tau) = \mathbb{E}_x[\mathrm{FPR}_x(\tau)], \qquad
    \mathrm{TPR}(\tau) = \mathbb{E}_x[\mathrm{TPR}_x(\tau)].
    \]
\end{proposition}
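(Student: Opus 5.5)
This is a direct application of the law of total expectation (the tower property). The plan is to make the sampling model behind the concatenated scores explicit, and then condition on the target sample.

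In the concatenated evaluation, a single out-score is produced by first drawing the target point $x$ from the distribution of target samples, which we denote $x \sim \pi$. The score is then $\ell^{\text{(out)}} = \ell^{\text{(out)}}_x = \cL(x, \cA(D))$, with $D \sim \cP^n$ and the internal randomness $\xi$ of $\cA$ drawn independently of $x$. This is the precise meaning of ``the dependence of both the model and sample to the scores is marginalized away.'' In the empirical grid $\miagrid$, $\pi$ is the uniform distribution over the $N$ columns. The in-scores are described the same way, using $\cA(D \cup x)$ in place of $\cA(D)$.

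With this model, the proof takes one line for each rate. Write
\[
\fpr(\tau) = \Pr(\ell^{\text{(out)}} > \tau) = \mathbb{E}_{x, D, \xi}\bigl[\mathbf{1}[\ell^{\text{(out)}}_x > \tau]\bigr].
\]
Apply the tower property, conditioning on $x$:
\[
\mathbb{E}_{x}\Bigl[\mathbb{E}_{D,\xi}\bigl[\mathbf{1}[\ell^{\text{(out)}}_x > \tau] \,\big|\, x\bigr]\Bigr].
\]
Because $(D, \xi)$ is independent of $x$, the conditional distribution of $(D, \xi)$ given $x$ is still $\cP^n \otimes \text{law}(\xi)$. The inner conditional expectation is therefore exactly the per-sample quantity $\fpr_x(\tau)$ defined in \cref{sec:sample_vulnerability}, which gives $\fpr(\tau) = \mathbb{E}_x[\fpr_x(\tau)]$. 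The argument for $\tpr$ is the same with in-scores in place of out-scores.

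The only delicate step is the modelling one: stating clearly that $\Pr$ in the concatenated definitions refers to the joint law of $(x, D, \xi)$, with $(D, \xi)$ independent of $x$. Measurability and integrability cause no difficulty, since indicators are bounded and $\ell_x$ is measurable in $(x, D, \xi)$.

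For the finite grid, an equal number of out-evaluations per column is needed for the uniform weighting over columns to hold. This is the case when membership is balanced, for instance when each point is out for $M/2$ models. Otherwise the identity still holds, but with $\pi$ weighted by each column's share of out-scores (respectively in-scores). I would state this as a remark.
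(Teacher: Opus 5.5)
Your proposal is correct and matches the paper's proof: write the concatenated rate as $\mathbb{E}_{x,D,\xi}[\mathbf{1}[\ell^{\text{(out)}}_x > \tau]]$, condition on $x$ via the tower property, and identify the inner expectation with $\fpr_x(\tau)$, then do the same for $\tpr$. The paper leaves implicit both the independence of $(D,\xi)$ from $x$ and the per-column weighting in a finite, possibly unbalanced grid; your explicit treatment of these is a sensible clarification.
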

\begin{proof}
For the concatenated $\fpr$ we have
\begin{align}
    \fpr = \Pr(\ell^{\text{(out)}} > \tau) 
        &= \mathbb{E}_{x, D, \xi}[\mathbf{1}[\ell^{\text{(out)}}_x > \tau]]
        = \mathbb{E}_x[\mathbb{E}_{D, \xi}[\mathbf{1}[\ell^{\text{(out)}}_x > \tau]]]
        = \mathbb{E}_x[\fpr_x],
\end{align}
which similarly follows for the $\tpr$. 
\end{proof}
Thus, choosing a single decision threshold $\tau$ that yields $\fpr=\alpha$ at the
concatenated level corresponds to choosing $\tau$ such that the average
per-sample $\fpr_x$ equals $\alpha$, while individual $\fpr_x$ may vary widely.
Hence calibrating the concatenated $\fpr$ does not necessarily serve as a good
proxy of understanding the vulnerability.


\subsection{Post-processing for better aggregation}
\label{subsec:post-processing}

Next, we present our solution which solves the $\fpr$ calibration problem of the
concatenated approach. For a fixed sample $x$, suppose the learning outcome
$S(x, A(D))$ satisfies
\begin{align}
    S^{\mathrm{out}}_x \sim \mathcal{N}(\muoutx, \sigoutx^2),
    \qquad
    S^{\mathrm{in}}_x \sim \mathcal{N}(\muinx, \siginx^2),
\end{align}
and define the mean gap $\Delta_x = \muinx - \muoutx$.

Now, consider the affine transform
\begin{align}
    \label{eq:aff-trans}
    f_x(t) =
    \frac{\mathrm{sign}(\Delta_x)(t - \muoutx)} {\sigoutx}.
\end{align}
Under this transform we obtain the standardized in/out distributions
\begin{align}
    \hat{S}^{\mathrm{out}}_x
    &:= f_x(S^{\mathrm{out}}_x)
    \;\sim\;
    \mathcal{N}(0,1), \\
    \hat{S}^{\mathrm{in}}_x
    &:= f_x(S^{\mathrm{in}}_x)
    \;\sim\;
    \mathcal{N}\!\left(\frac{|\Delta_x|}{\sigoutx},\, \frac{\siginx^2}{\sigoutx^2}\right).
\end{align}
By Lemma~\ref{lem:bijective-invariance}, this standardization does not change
the trade-off function:
\begin{equation}
\label{eq:tradeoff-gaussian-standardized}
    T\!\left(
        \mathcal{N}(\muinx,\sigma_x^2),
        \mathcal{N}(\muoutx,\sigma_x^2)
    \right)(\alpha)
    =
    T\!\left(
        \mathcal{N}\!\left(\frac{|\Delta_x|}{\sigoutx}, \frac{\siginx^2}{\sigoutx^2}\right),
        \mathcal{N}(0,1)
    \right)(\alpha).
\end{equation}


\subsubsection{Equal variances}
Now, assume that $\siginx=\sigoutx:=\sigma_x$.
In this case from \eqref{eq:tradeoff-gaussian-standardized} we can see that all per-sample in/out pairs can be
viewed as testing between a standard normal and a shifted unit-variance
normal with mean $|\Delta_x|/\sigma_x$.

Next Lemma shows us that for the equal
variance case, the likelihood-ratio is a bijection. 
\begin{lemma}
    \label{lem:llr-biject}
    For two Gaussians $P = N(\mu_1, \sigma^2)$ and $Q = N(\mu_2, \sigma^2)$
    with density functions $p$ and $q$ respectively. If $\mu_1 > \mu_2$ the 
    log-likelihood ratio, $\Lambda(t) = \log \frac{p(t)}{q(t)}$
    is a bijection w.r.t.\  $t \in \mathbb{R}$. Moreover, if $\mu_1 > \mu_2$, the $\Lambda(t)$
    is also monotonically increasing.
\end{lemma}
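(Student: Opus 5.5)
The plan is to compute the log-likelihood ratio in closed form and observe that it is an affine function of $t$ with nonzero slope. Writing out the Gaussian densities, the normalising constants $1/\sqrt{2\pi\sigma^2}$ coincide because the variances are equal, so they cancel in the ratio. This leaves
\begin{align}
    \Lambda(t) = \log\frac{p(t)}{q(t)} = \frac{(t-\mu_2)^2 - (t-\mu_1)^2}{2\sigma^2}.
\end{align}

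The key step is to expand the two squares. The quadratic terms $t^2$ cancel, which is exactly where the equal-variance assumption is used. What remains is
\begin{align}
    \Lambda(t) = \frac{\mu_1-\mu_2}{\sigma^2}\, t + \frac{\mu_2^2-\mu_1^2}{2\sigma^2} =: a t + b,
\end{align}
with slope $a = (\mu_1-\mu_2)/\sigma^2$ and intercept $b = (\mu_2^2-\mu_1^2)/(2\sigma^2)$.

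Both claims then follow from elementary properties of affine maps. The function $\Lambda$ is a bijection of $\mathbb{R}$ as soon as $a \neq 0$, since it has the explicit inverse $\Lambda^{-1}(s) = (s-b)/a$. This holds because $\mu_1 \neq \mu_2$. Moreover, if $\mu_1 > \mu_2$ then $a > 0$, so $\Lambda$ is strictly, and in particular monotonically, increasing. In the opposite case $\mu_1 < \mu_2$ it would instead be a decreasing bijection, which is consistent with the hypothesis $\mu_1>\mu_2$ stated in the lemma.

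There is no real obstacle here; the only point needing care is the algebraic cancellation of the $t^2$ terms. I would remark that this cancellation is exactly what fails when the variances differ, since $\Lambda$ then becomes a genuine quadratic and is no longer injective. This justifies why the lemma, and its use together with \cref{lem:bijective-invariance}, is restricted to the equal-variance case.
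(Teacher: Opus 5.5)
Your proposal is correct and matches the paper's proof: both write $\Lambda(t)$ in closed form, note that the $t^2$ terms cancel, and obtain the affine map $\frac{1}{2\sigma^2}\bigl(2t(\mu_1-\mu_2)-\mu_1^2+\mu_2^2\bigr)$, from which bijectivity follows. You are slightly more careful than the paper: it says ``affine and thus a bijection'' without noting that the slope $(\mu_1-\mu_2)/\sigma^2$ must be nonzero, and it leaves monotonicity implicit, whereas you read it off from the sign of the slope.
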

\begin{proof}
    With simple algebra we can see that
    \begin{align}
        \Lambda(t) = \frac{1}{2\sigma^2}\left( 
            2t(\mu_1 - \mu_2) - \mu_1^2 + \mu_2^2
        \right).
    \end{align}
    Clearly $\Lambda(t)$ is an affine mapping and thus a bijection.
\end{proof}

By Lemma~\ref{lem:llr-biject}, any level-$\alpha$ likelihood ratio test between
$P$ and $Q$ can be implemented directly in the sample space: the Neyman--Pearson
test that rejects $Q$ when $\Lambda(t) > \tau_\alpha$ is equivalent to rejecting
when $t > t_\alpha$, where $t_\alpha$ is the $(1-\alpha)$-quantile of the
out-distribution under $Q$. In particular, since $ |\Delta_x| / \sigma_x > 0$
the LLR is monotonically increasing in the learning outcome, so we can choose
the decision threshold from the out-distribution and apply it directly to the
observed value.

Since the out distribution for the post-processed learning outcomes is fixed
to the standard normal among all the samples, we can use the same decision threshold
$t_\alpha = \Phi^{-1}(1-\alpha)$ to obtain the same $\fpr_x$ for all
$x \sim \cP$. Finally, we can estimate the average $\tpr$ as
\begin{align}
    \overline{\tpr} = \frac{1}{NM}\sum_{\substack{i=1\ldots N \\ m = 1\ldots M}}
        \mathbf{1}[\hat{S}_{i,j} > t_{\alpha}].
\end{align}

Now, similar to the concatenated approach, our post-processed aggregated
LiRA vulnerability gives an estimate of the vulnerability marginalized over
both the models and sample in $\miagrid$. Therefore, the metric is not suitable
for assessing the per-sample nor the per-model vulnerability, but should be
seen as a measure of how much the learning algorithm leaks about the individuals
on average. However, as we are averaging over $NM$ values in total, we can
actually have a rather confident estimate of the underlying average per-sample
vulnerability, even in the low $\fpr$ regime. 

\subsubsection{Unequal variances}
When $\siginx \neq \sigoutx$, the LLR is no longer a bijection, which means that the
rejection region for the LLR test cannot be uniquely mapped into a single sided
test on the tails of the learning outcomes. Therefore testing if $\hat{S}_{i,j}
> t_\alpha$ does not satisfy the conditions of Neyman-Pearson Lemma, and the
test will have less power than the LLR test.

However, even though the test is not uniformly most powerful like the
log-likelihood ratio test, we still retain calibration in terms of the
per-sample $\fpr$s. This is because the transformed learning outcomes
$\hat{S}^{\mathrm{out}}_x$ will follow the standard normal regardless.
Therefore, the $\tpr$ arising from the concatenated transformed test will lower
bound the true average vulnerability.

\section{Efficient LiRA and Finite Population Bias}
\label{sec:finite-population}

When implementing a MIA in practice on a real dataset, we do not have access to the underlying data generating process $\cP$ but only a finite superset
$D_{\mathrm{full}} = \{x_1,\dots,x_{N_{+}}\} \sim \cP^{N^+}$
from which the training sets of each model must be sampled.
For each target model $\cM$, the corresponding training set $D$ of $N$ elements is sampled without replacement from $D_{\mathrm{full}}$.

Given these models, the ``efficient LiRA'' procedure proposed by \citet{carlini_membership_2021} recycles these models so that each acts as a target model in turn, while the rest act as shadow models.

This yields an $M \times N_{+}$ LiRA grid analogous to our generic grid $G$, but
with two important differences compared to the ``ideal'' LiRA setting where
shadow datasets are independently drawn as $D \sim P^N$:
\begin{itemize}[nosep,leftmargin=2em]
    \item All shadow and target models are trained on overlapping subsets of the
          \emph{same} finite $D_{\mathrm{full}}$.
    \item Each $D$ is sampled without replacement from $D_{\mathrm{full}}$,
          rather than from an infinite population.
\end{itemize}
As a consequence, estimators of the in/out variances of the learning outcomes
(or LiRA scores) are subject to a finite population bias. Intuitively,
reusing the same finite set $D_{\mathrm{full}}$ multiple times makes the
empirical variability of the shadow models smaller than what one would observe
if each training set were drawn independently from $P$.

\subsection{Finite Population Correction}
\label{subsec:FPC}
To see the effect at a high level, consider any scalar statistic $T(D)$ of a
dataset $D$ of size $N$ drawn \emph{without replacement} from a finite population
of size $N_{+}$. Classical sampling theory \citep{cochran1977sampling} shows that
\begin{equation}
    \mathrm{Var}[T(D)]
    \approx
    \mathrm{Var}_{\text{iid}}[T(D)]
    \cdot
    \Bigl(1 - \frac{N}{N_{+}}\Bigr),
    \label{eq:fpc}
\end{equation}
where $\mathrm{Var}_{\text{iid}}[T(D)]$ is the variance that would be obtained if
the elements of $D$ were sampled independently from the underlying population
(i.e., with replacement). The factor
\begin{equation}
    \mathrm{FPC}
    =
    1 - \frac{N}{N_{+}}
\label{eq:fpc}
\end{equation}
is the standard finite population correction. When $N / N_{+}$ is small, the
correction is negligible, but for larger $N$ the effect becomes significant. In efficient LiRA, it is common to choose $N$ to
be a large fraction of $N_{+}$ (e.g.\ $N = 0.5\,N_{+}$), in which case
$\mathrm{FPC} \approx 0.5$ and the variance may be substantially underestimated.

In our setting, the statistic is the learning outcome (or its LiRA score)
for a fixed point $x$ across shadow models. Efficient LiRA estimates the in/out
distributions for $x$ from $\{\cM^{(m)}\}_{m=1}^M$, but all $\cM^{(m)}$ are trained
on overlapping subsets of a fixed finite $D_{\mathrm{full}}$. This induces an
analogous finite population effect: the empirical in/out variances estimated from
the shadow models under efficient LiRA are biased towards \emph{smaller} values
compared to the variances one would obtain from independent draws
$D \sim P^N$.

Since LiRA's distinguishability is driven in large part by the separation
between the in and out distributions relative to their variances, underestimating
these variances can lead to overly optimistic estimates of membership inference
vulnerability. Moreover, the vulnerability estimated via efficient LiRA is tied
to the particular realization $D_{\mathrm{full}}$, rather than to the
population-level distribution $P$, introducing an additional dataset-dependent
bias.

\section{Experiments}
\label{sec:results}
\paragraph{Experimental Setup For Efficient LiRA}
\label{subsec:exp_setup}
\begin{itemize}[nosep,leftmargin=2em]
    \item \textbf{Model} We use TabPFN~\citep{HollmannMPKKHSH25}, a model which leverages in-context learning~\citep{DBLP:conf/nips/0001TLV22} to predict labels for hitherto unseen training samples.
    \item \textbf{Dataset} We use 10K samples from UCI's Adult~\citep{BeckerK96} dataset and 1K samples from UCI's Credit~\citep{Hofmann94}
    dataset for binary classification tasks. 
\end{itemize}

\begin{figure}[htbp]
    \centering
        \includegraphics[width=\textwidth]{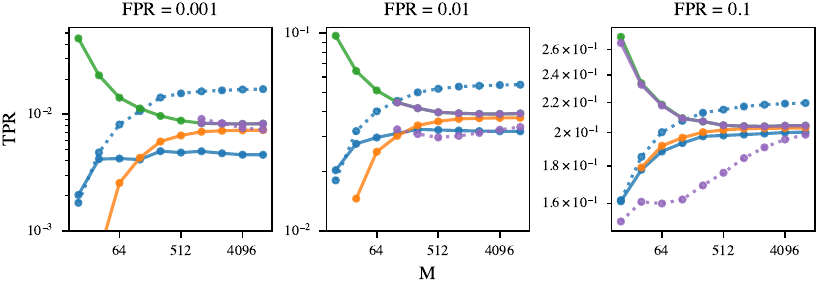}
        \includegraphics[width=\textwidth]{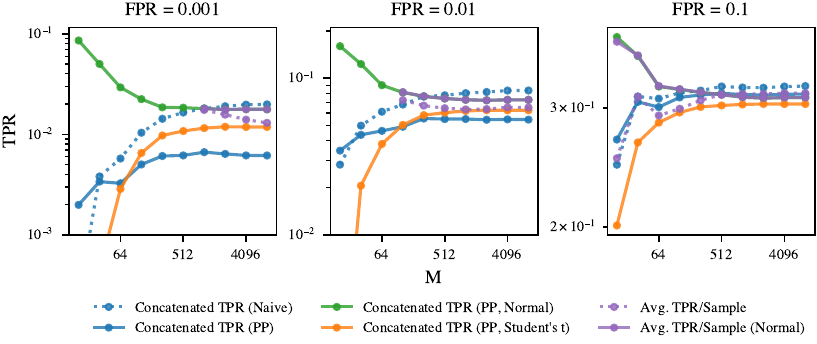}
    \caption{\textbf{Comparison of MIA evaluation strategies across varying $\boldsymbol{M}$ and FPRs} for TabPFN models trained on UCI Adult (top), and UCI Credit (bottom) datasets. Naive concatenation consistently yields higher TPRs, overestimating average privacy risk relative to post-processed concatenation which produces lower and more stable estimates. At small $M$, Average TPR/Sample is noisy; as $M$ increases, it converges toward Concatenated TPR (PP), indicating that the latter is a conservative, but reliable finite-sample proxy for aggregated per-example privacy risk.}
    \label{fig:post-processing-plots}
    \vspace{-1.25\baselineskip}
\end{figure}

\subsection{Effect of Post-Processing On Efficient LiRA Evaluation}
\label{subset:pp_lira_tabpfn}

In \cref{fig:post-processing-plots}, we study following MIA evaluation strategies for a MIA grid of TabPFN models trained on Adult and Credit datasets:
\begin{itemize}[nosep,leftmargin=2em]
    \item \textbf{Concatenated TPR (Naive):} All observations from the MIA grid are concatenated and the TPR is computed at a fixed FPR directly from the pooled statistics. This serves as a baseline that does not provide valid global FPR calibration.

    \item \textbf{Concatenated TPR (PP):} Before concatenation, the statistics in the MIA grid are post-processed as described in \cref{subsec:post-processing}. The TPR at a fixed FPR is then computed from the post-processed, pooled statistics.

    \item \textbf{Average TPR/Sample:} The per-sample TPR, $\tpr_x$, is computed separately for each sample in the MIA grid and averaged. By construction, this approach ensures that the per-sample $\fpr_x \approx\alpha$ for every sample.

    \item \textbf{Concatenated TPR (PP, Normal):} The post-processed statistics are concatenated and the decision threshold is set analytically by modeling the out-distribution as $\mathcal{N}(0, 1)$ as in \cref{subsec:post-processing}

    \item \textbf{Concatenated TPR (PP, Student's $t$):} The decision threshold is set analytically using Student's $t$-distribution with degrees of freedom estimated from the out-distribution. This yields a more conservative classifier that can account for heavy-tailed data.

    \item \textbf{Average TPR/Sample (Normal):} The per-sample TPR, $\tpr_x$, is computed for each sample, but with the decision threshold calibrated analytically under the $\mathcal{N}(0, 1)$ assumption for the out-distribution.
\end{itemize} 

\looseness=-1 In \cref{fig:post-processing-plots}, we observe that naive concatenation-based MIA evaluation consistently overestimates the average privacy risk of the training algorithm. Post-processing normalizes the per-sample out distributions before aggregation, ensuring that the decision threshold is applied on a common scale across all samples. Consequently, concatenated TPR (PP) is less than concatenated TPR (Naive) and this difference stabilizes at large $M$. Furthermore, we observe that the per-sample TPR estimates are noisy at small $M$ because samplewise in/out distributions are estimated from fewer shadow models, whereas concatenation pools information but tends to suppress example-level heterogeneity. As $M$ grows, Average TPR/Sample coverges to Concatenated TPR (PP) showing that the latter acts as a finite-sample conservative approximation to Average TPR/Sample whose stability is contingent on large $M$.
Using analytical decision threshold from $\mathcal{N}(0,1)$ leads to inflated TPR estimates, especially when $M \leq 512$.
Using the decision threshold from Student's t-distribution can yield accurate results when $M\ge 256$.

\cref{fig:fprs_px_plot} shows the distribution of per-sample $\fpr$s across samples for multiple target $\fpr$s ($\alpha \in \{0.001, 0.01, 0.1\}$) computed using $M=4096$ models. The naive concatenated approach produces $\fpr$ distributions with heavy tails indicating a non-trivial fraction of samples get assigned an $\fpr > \alpha$. It does not account for different samplewise score variances. Instead, it applies a single global decision threshold across all samples. Meanwhile, the $\fpr_x$ for evaluations using post-processed statistics bring these distributions closer to the target $\fpr$.

In \cref{subsec:oracle_access}, we consider an oracle-access setting where the attacker uses the MIA grid for estimating the per-sample in/out distributions. Relying on this assumption, we can isolate the effect of the evaluation strategy itself. This allows us to compare Concatenated TPR (Naive), Concatenated TPR (PP) and Average TPR/Sample without confounding from finite-$M$ samplewise estimation error. Consistent with the results in \cref{fig:post-processing-plots}, Average TPR/Sample tends to converge to Concatenated TPR (PP) for large $M$ showing that the latter is a efficient and cheap proxy for the former.

\subsection{Analytical Simulation of Efficient LiRA}
\label{subsec:mvn_model} 
To demonstrate the finite population bias on efficient LiRA evaluation, we consider $D_{\mathrm{full}}$ to be sampled from a multivariate Gaussian distribution, $x \sim \cN (0, \sigma^2 \mathbb{I}_d)$. 
We define the model outcome $\cM^{(m)}$ as:
\[
\cM^{(m)} = \frac{1}{|D^{(m)}|}\sum_{x \in D^{(m)}} x, \quad D^{(m)} \subset_N D_{\text{full}}.
\]
Given a target pair $(x_i, \cM^{(m)})$, we use its inner product with the empirical
mean as the target statistic:
\[
s_i^{(m)}
=
\left\langle x_i, \cM^{(m)} \right\rangle.
\]

Figure~\cref{fig:fpc-1} demonstrates that using a finite superset and a finite $M$ allows for empirical standard deviation ($\sigma_\text{emp}$) to be consistently smaller compared to the expected analytical standard deviation per-sample ($\sigma_\text{ana}$). Applying the FPC (\cref{eq:fpc}) helps scale up $\sigma_\text{emp}$ close to $\sigma_\text{ana}$.
Figure~\cref{fig:fpc-2} further shows the distribution of $\frac{\sigma_\text{emp}}{\sigma_\text{ana}}$ across $D_{\mathrm{full}}$ for both in/out distributions, estimated from $M=2048$ shadow models fitted on a finite superset of size $N_{+}=1000$ with $N=500$, and $d=500$. The ratios concentrate around $\sqrt{\mathrm{FPC}}$ as predicted in \cref{subsec:FPC}. This confirms that efficient LiRA’s finite population bias arises from subsampling training datasets for shadow models from a finite superset $D_{\mathrm{full}}$. Not accounting for FPC leads to variance miscalibration that grows with the sampling ratio $N/N_+$ (see \cref{fig:fpc_by_subsampling}).

\begin{figure}[htbp]
    \centering

    \subfloat[Per-sample empirical and analytical $\sigma$ sorted by $\|X\|_2$\label{fig:fpc-1}]{
        \includegraphics[width=0.5\textwidth]{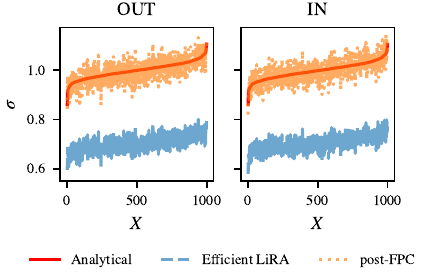}
    }
    \hfill
    \subfloat[Distribution of ratio of empirical and analytical $\sigma_x$. \label{fig:fpc-2}]{
        \includegraphics[width=0.45\textwidth]{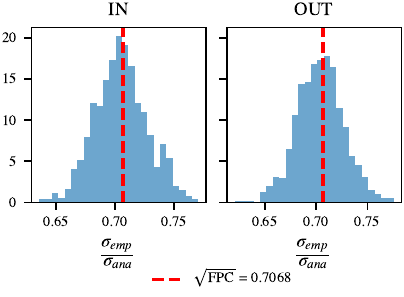}
    }
    \caption{\textbf{FPC correction can help recover the analytical $\sigma$.} (a) depicts empirical $\sigma$ with and without FPC. We observe that  post-FPC $\sigma_x$ closely track the expected analytical $\sigma_x$. In (b), we observe that FPC recovers the analytical $\sigma_x$ but with some residual scatter as depicted by the spread of $\sigma_\text{emp}/\sigma_\text{ana}$ around $\sqrt{\mathrm{FPC}}$. The plots use data from running the simulation described in \cref{subsec:mvn_model} with $M=2048$, $N_+ = 1000$, $d=500$, and $N=500$.}
    \label{fig:fpc}
    \vspace{-1.25\baselineskip}
\end{figure}

\section{Conclusion}

Our results show how to evaluate the average MIA vulnerability across a number of data points in terms of TPR at fixed low FPR more reliably and efficiently.
While this is a quantity we can evaluate efficiently and is clearly linked to the vulnerability, it is not ideal for this purpose.
As pointed out by \citet{AerniZT24}, vulnerability evaluation should focus on the most vulnerable data points, which can be watered down in the average.
This means that the average TPR values need to be interpreted with care.
However, any plausible alternative would require being able to identify the most vulnerable data points, and training a large number of models while including/excluding those points, which is highly non-trivial.
Our approach provides a relatively cheap and easy alternative for evaluating MIA vulnerability as a valid TPR at known FPR which is not available from previous methods. Source code for our experiments is available at \url{https://github.com/TrustworthyMLHelsinki/mi_vulnerbility_evaluation}.

\section*{Acknowledgments}
This work was supported by the Research Council of Finland (Finnish Center for Artificial Intelligence, FCAI, Grant 356499 and Grant 359111), the Strategic Research Council at the Research Council of Finland (Grant 358247) as well as the European Union (Project 101070617). Views and opinions expressed are however those of the author(s) only and do not necessarily reflect those of the European Union or the European Commission. Neither the European Union nor the granting authority can be held responsible for them. This work has been performed using resources provided by the CSC– IT Center for Science, Finland (Project 2003275). The authors acknowledge the research environment provided by ELLIS Institute Finland. 


\bibliography{references}
\bibliographystyle{abbrvnat}



\appendix

\counterwithin*{figure}{part}
\stepcounter{part}
\renewcommand{\thefigure}{A\arabic{figure}}
\setcounter{figure}{0}
\counterwithin*{table}{part}
\stepcounter{part}
\renewcommand{\thetable}{A\arabic{table}}
\setcounter{table}{0}

\section{Appendix}

\subsection{MIA Evaluation With Oracle Access}
\label{subsec:oracle_access}

In this setting, the attacker uses all 4095 models except the target model to compute per-sample statistics for $S_x^{\mathrm{out}}$ and $S_x^{\mathrm{in}}$ against $M$ target models. It helps isolate the behavior of the MIA evaluation procedure from finite-$M$ estimation error of the in- and out-distributions. In particular, it lets us study how naive concatenation, post-processed concatenation, and average per-sample TPR relate when the samplewise distributions are estimated accurately. Thus, the oracle plots provide an idealized reference for the privacy-relevant quantity that efficient LiRA variants aim to approximate. However, it should be interpreted as a calibration benchmark for the evaluation methods, not as a claim about the capabilities of a compute-limited attacker.

As shown in \cref{fig:oracle_plots}, under oracle access, Concatenated (Naive) approach overestimates average privacy risk, while Concatenated TPR (PP) provides a calibrated, conservative aggregate approximation to the per-sample benchmark. As with \cref{fig:post-processing-plots}, Average TPR/Sample computations converages towards Concatenated TPR (PP) as $M$ increases, suggesting that the post-processed concatenated statistic remains a strong finite-sample  approximation to average per-sample privacy risk.

\begin{figure}[htp]
    \centering

    \subfloat[For Adult dataset \label{fig:adult-tprs_oracle}]{
        \includegraphics[width=\textwidth]{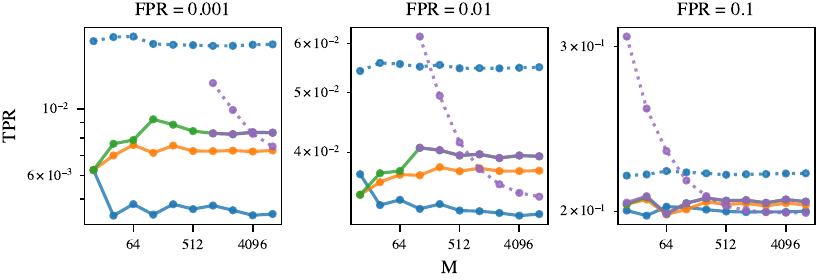}
    }
    \hfill
    \subfloat[For Credit dataset \label{fig:credit-tprs_oracle}]{
        \includegraphics[width=\textwidth]{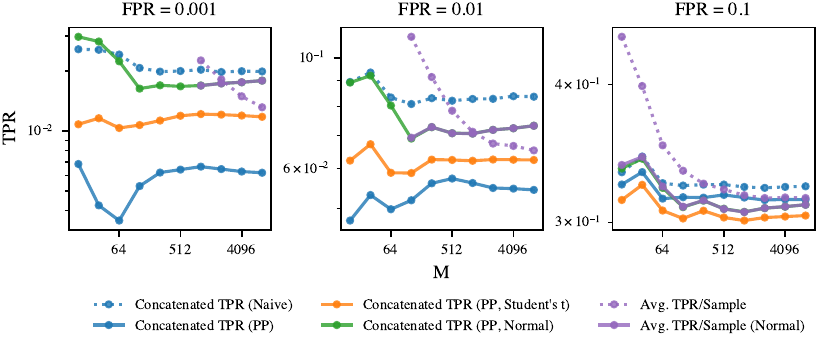}
        
    }
    \caption{\textbf{Oracle-access MIA evaluation}. The MIA grid is used to compute per-sample statistics isolating effect of finite-$M$ samplewise estimation error. Following that, we compare different MIA evaluation strategies across varying $\boldsymbol{M}$ and FPRs for TabPFN models trained on (a) UCI Adult, and (b) UCI Credit datasets. 
    Variation in $M$ 
    Naive concatenation overestimates average privacy risk, while Concatenated TPR (PP) provides a calibrated, conservative aggregate approximation to Average TPR/Sample. }
    \label{fig:oracle_plots}
\end{figure}

\subsection{Additional Plots}

\begin{figure}[htbp]
    \centering

    \subfloat[For Adult dataset \label{fig:adult-fprs}]{
        \includegraphics[width=\textwidth]{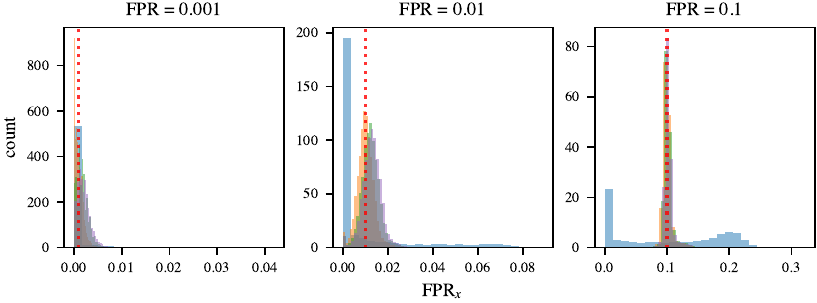}
    }
    \hfill
    \subfloat[For Credit dataset \label{fig:credit-fprs}]{
        \includegraphics[width=\textwidth]{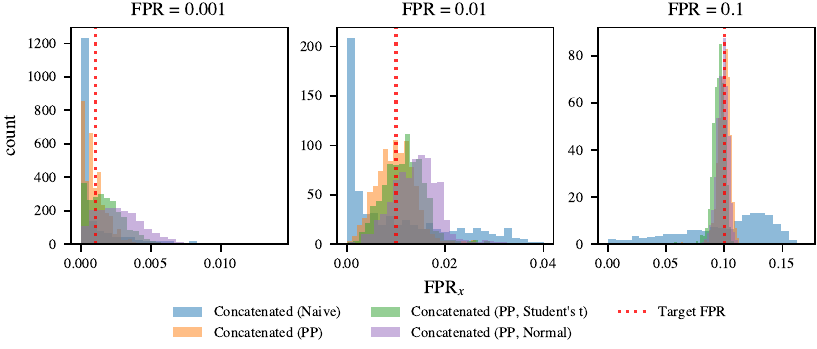}
        
    }

    \caption{\textbf{Distribution of per-sample FPRs evaluated at decision thresholds for various concatenation-based MIA evaluation strategies.}  Naive concatenation applies single decision threshold to unnormalized statistics, leading to uneven FPRs across samples. Post-processing normalizes samplewise out distributions before aggregation, improving calibration and reducing per-sample FPR heterogeneity. The plots are generated for a MIA grid of TabPFN models fitted on Adult and Credit datasets.}
    \label{fig:fprs_px_plot}
\end{figure}

\begin{figure}
    \centering

    \subfloat[$N/N_+ = 0.125$ \label{fig:n_50}]{
        \includegraphics[width=0.5\textwidth]{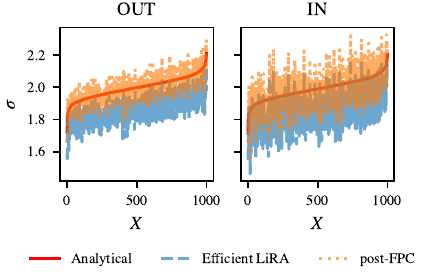}
    }
    \subfloat[$N/N_+ = 0.75$ \label{fig:n_50}]{
        \includegraphics[width=0.5\textwidth]{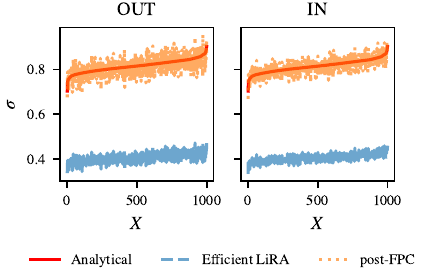}
    }
    \hfill
    \caption{\textbf{Finite Population Bias depends on subsampling ratio ($\boldsymbol{N/N_+}$).} The plots depict the difference between analytical and empirical standard deviation ($\sigma_x$) as a factor of subsampling ratio ($N/N_+$). As $N/N_+ \rightarrow 0$, empirical $\sigma_x$ computed over the MIA grid ceases to be a good estimate of analytical sigma. FPC correction (\cref{subsec:FPC}) helps to scale up empirical $\sigma_x$ to their analytical equivalent. The plots use data from running the simulation described in \cref{subsec:mvn_model} with $M=2048$, $N_+ = 1000$, $d=500$, and varying $N$.}
    \label{fig:fpc_by_subsampling}
\end{figure}

\subsection{Licenses and Access}
\label{subsec:licences}

\begin{itemize}[nosep,leftmargin=2em]
    \item \textbf{UCI Adult} \citep{BeckerK96}: This dataset is licensed under a Creative Commons Attribution 4.0 International (CC BY 4.0) license and we use the dataset available at \url{https://doi.org/10.24432/C5XW20}.
    \item \textbf{UCI German Credit Data} \citep{Hofmann94}: This dataset is licensed under a Creative Commons Attribution 4.0 International (CC BY 4.0) license and we use the dataset available at \url{https://doi.org/10.24432/C5NC77}.
    \item \textbf{TabPFN} \citep{HollmannMPKKHSH25}: We use TabPFN model hosted on HuggingFace at \url{https://huggingface.co/Prior-Labs/tabpfn_2_5}. It is licensed under a non-commercial Prior Labs License (Apache 2.0).
\end{itemize}

\subsection{Computational Resources }
\label{subsec:compute}

We train the TabPFN model on NVIDIA Volta V100 GPUs (with 32 GB VRAM) and 10 GB of host memory. Precise estimates for compute for training time are hard to make as it varies by task difficulty and dataset size. Though, to give a rough estimate of scale, training a single TabPFN model on Adult (with $\approx$ 5K samples) takes < 2 minutes with a GPU. 

\end{document}